\pgfplotsset{compat=newest}
\newtheorem{prop}{Proposition}
\DeclareMathOperator{\Bet}{Beta}
\DeclareMathOperator{\Gam}{Gamma}
\DeclareMathOperator{\Del}{Delta}
\DeclareMathOperator{\E}{E}
\DeclareMathOperator{\var}{var}
\def\atan2{\mathop{\rm atan2}}
\title{\LARGE \bf
Closed-Form Full Map Posteriors for Robot Localization \\with Lidar Sensors
}
\author{Lukas Luft, Alexander Schaefer, Tobias Schubert, Wolfram Burgard
\thanks{\copyright\ 2017 IEEE. Personal use of this material is permitted.  Permission from IEEE must be obtained for all other uses, in any current or future media, including reprinting/republishing this material for advertising or promotional purposes, creating new collective works, for resale or redistribution to servers or lists, or reuse of any copyrighted component of this work in other works.}
\thanks{Lukas Luft and Alexander Schaefer contributed equally to this work.}
\thanks{This work has been partially supported by the European Commission in the Horizon
	2020 framework program under grant agreement 644227-Flourish; by the Graduate School
	of Robotics in Freiburg; and by the State Graduate Funding Program of Baden-W\"{u}rttemberg.}
\thanks{All authors are with the Department of Computer Science, University of Freiburg, Germany.}
\thanks{\tt\small \{luft,aschaef,tobschub,burgard\}}
\thanks{\tt\small{@cs.uni-freiburg.de}}
\thanks{Digital Object Identifier (DOI): 10.1109/IROS.2017.8206583}}%
\begin{document}

\maketitle
\thispagestyle{empty}
\pagestyle{empty}

\begin{abstract}
  A popular class of lidar-based grid mapping algorithms computes for
  each map cell the probability that it reflects an incident laser
  beam. These algorithms typically determine the map as the set of reflection
  probabilities that maximizes the likelihood of the underlying laser
  data and do not compute the full posterior distribution over all
  possible maps.  Thereby, they discard crucial information about the
  confidence of the estimate. The approach presented in
  this paper preserves this information by determining the full map
  posterior.  In general, this problem is hard because distributions
  over real-valued quantities can possess infinitely many dimensions.
  However, for two state-of-the-art beam-based lidar models, our
  approach yields closed-form map posteriors that possess only two
  parameters per cell.  Even better, these posteriors come for free,
  in the sense that they use the same parameters as the traditional
  approaches, without the need for additional computations.  An
  important use case for grid maps is robot localization, which we
  formulate as Bayesian filtering based on the closed-form map
  posterior rather than based on a single map.  The resulting measurement
  likelihoods can also be expressed in closed form.  In simulations
  and extensive real-world experiments, we show that leveraging the
  full map posterior improves the localization accuracy
  compared to approaches that use the most likely map.
\end{abstract}

\section{Introduction}

Robot mapping and localization are probabilistic processes.
Therefore, it is desirable to determine the posterior probability
distribution over all possible maps given all observations rather than
to determine a particular map. For some types of grid maps, it is
well-known how to compute this distribution.

Grid maps are a popular representation of the environment of a robot.
In their basic formulation, each voxel of the map holds a binary value
which expresses whether the voxel is occupied or not.  For these
so-called occupancy grids, the posterior distribution over each
occupancy state is characterized by one real-valued parameter.
Moravec~\cite{Moravec1988:GM} and Elfes~\cite{Elfes1989:Thesis,
Elfes1989_OGM} show how to compute this parameter. 

In real-world scenarios, however, map voxels are not always completely
free or completely occupied. They often contain structures smaller
than the grid resolution.  As occupancy grids are not capable of
representing these structures, it makes sense to use real-valued
maps.

A popular example of real-valued maps are so-called reflection
maps~\cite{Haehnel2003:dynamic_environments}.  Another method which
also characterizes cells by real values builds so-called decay-rate
maps~\cite{Schaefer17:Decay}.  In contrast to posteriors over discrete
map values, posteriors over real-valued maps, like reflection maps and
decay-rate maps, can contain infinitely many parameters.  Therefore,
one typically only computes the mode of the map posterior and uses it
as map -- with few exceptions~\cite{Marks08:GammaSLAM}.

\begin{figure}
\centering
\includegraphics[width=\columnwidth]{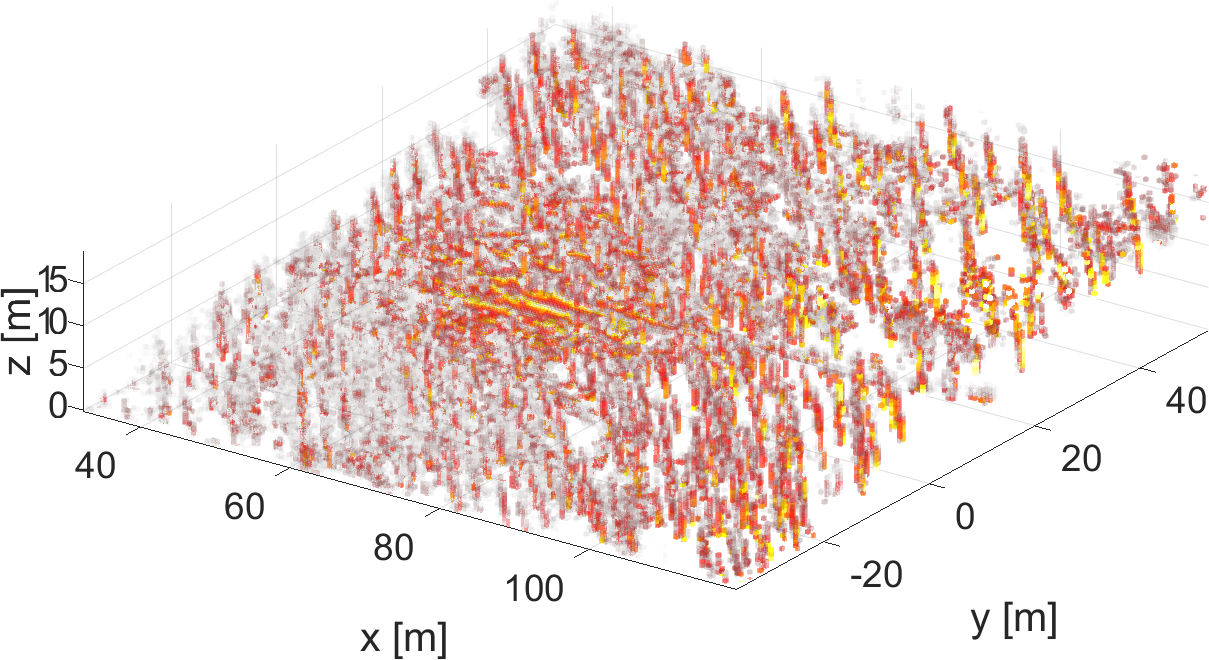}
\caption{Perspective view of a section of the reflection map built
  from the forest dataset. The map encodes the reflection probability
  of each voxel by the voxel color: Bright yellow corresponds to low
  reflection probability, dark red corresponds to high reflection
  probability. Although the map is highly cluttered, one can clearly
  recognize a large number of tree trunks.}
\label{fig:schauinsland}
\end{figure}

In this paper, we present a method to derive the full posterior over real-valued grid maps based on data provided by lidar sensors. 
Our approach, which is applicable to a broad class of forward sensor models, relies on a rigorous Bayesian formulation of the mapping process. 
For the reflection model and the decay-rate model in particular, the proposed approach leads to closed-form map posteriors.
These posteriors come for free: The most likely map already contains the required parameters.

In addition, we leverage the full map posterior for robot localization, the process of estimating the belief over the robot pose. Just like approaches that use a single given map, we can express the recursive Bayesian update in closed-form. 
Although our approach possesses the same computational complexity as the former ones, we demonstrate that it yields higher accuracies in extensive localization experiments.

\section{Related Work}
\label{sec:rw}

The proposed approach computes the posterior over real-valued grid maps.
Therefore, we structure our overview of the related work as follows:
We start with grid-based mapping approaches that compute the full posterior, move on to approaches which compute the most likely real-valued grid map, and close with posteriors over feature-based maps.

In robotics, occupancy grid maps are widely used.
To derive the posterior over their binary values, most approaches assume that the individual voxels are independent.
Then, the binary Bayes filter allows to recursively update the map posterior based upon the inverse sensor model, as shown by Moravec~\cite{Moravec1988:GM} and Elfes~\cite{Elfes1989:Thesis}, \cite{Elfes1989_OGM}. 
To obtain a full posterior over a discrete map in the context of SLAM, Doucet~et~al.~\cite{Doucet2000:RBPF} and Tipaldi~et~al.~\cite{Tipaldi13} employ a Rao-Blackwellized particle filter~\cite{Murphy2000}. 
Each of the particles represents not only a pose hypothesis, but also holds a distribution over a discrete map.
Thrun~\cite{Thrun2003:OccupancyMaps} uses a forward sensor model to compute posteriors over grid maps. 
He drops the assumption of voxel independence by accounting for measurement noise.
Marks~et~al.~\cite{Marks08:GammaSLAM} present an approach to compute posterior distributions over real-valued grid maps; in their case, each map voxel represents the height variance of the surface.

Other than Marks~et~al.~\cite{Marks08:GammaSLAM}, most approaches for real-valued grid maps compute the most likely map only: 
H{\"a}hnel~et~al.~\cite{Haehnel2003:dynamic_environments}, for example, extend the reflection model by introducing a binary variable that expresses whether a reflection is caused by a dynamic object or a static object. 
The recently introduced decay-rate model~\cite{Schaefer17:Decay} also produces real-valued grid maps.
These maps represent decay rates of the laser ray instead of reflection probabilities. 
All approaches in this paragraph have in common that they consistently leverage the forward sensor model for both mapping and localization.

Instead of using voxels, maps can also be represented by a finite set of landmarks.
Extended Kalman Filtering techniques assume the robot pose and the positions of these landmarks to be normally distributed and compute the full posterior over the robot pose and the map in closed form.
For one example among a wide range of publications in this context, see~\cite{Castellanos99:SLAM}.
Extended Kalman Filtering is also a popular choice for collaborative localization, where robots use their teammates as moving landmarks.
The belief of the joint pose state can then be interpreted as posterior over a dynamic landmark map, see for example~\cite{Luft16:DCL}.

\section{Approach}
\label{sec:approach}

This section describes how to calculate the full posterior over a real-valued
grid map and how to use it for localization.
Our approach is applicable to a broad class of beam-based sensor models, which we define in~{\ref{sec:recap}. We call them factorizing models.
In this section, we also recall the formulas for two examples of this class:
the reflection model and the decay-rate model.
In \ref{sec:recursion}, we derive a recursive update equation to compute
the posteriors over grid maps based on factorizing models.
We leverage this equation in \ref{sec:explicit_posterior} to derive closed-form posteriors over reflection maps and decay-rate maps.
Once the posteriors are established, we move on to perform robot localization:
Section~\mbox{\ref{sec:localization}} establishes a general recursive Bayesian update equation for localization based on map posteriors rather than based on a single given map.
For the reflection model and the decay-rate model, this update equation possesses closed form, as presented in~\ref{sec:explicit_likelihood}.
Table~\ref{tab:notation} provides an overview over the notation used throughout the paper.

\bgroup
\def\arraystretch{1.3}
\begin{table}[ht]
\centering
 \begin{tabular}
{ | r l | }
	\hline
  $i$ 			& voxel index\\
  $v_i$			& $i^\textrm{th}$ voxel\\ 
  $H_i$			& total number of hits in $v_i$ during mapping\\
  $M_i$			& total number of misses in $v_i$ during mapping\\
  $\lambda_i$   & decay rate in $v_i$ \\
  $\mu_i$		& reflection probability in $v_i$\\
  $x$			& sensor pose with respect to map frame\\
  $m$			& map \\
  $m_i$			& map value in voxel $v_i$\\
  $r$           & radius of a laser ray\\
  $r_i$			& distance that a ray travels inside $v_i$\\
  $R_i$			& total distance that all rays travel in $v_i$ during mapping\\
  $\mathcal{I}$ & set of all voxels\\
  $N=\left|\mathcal{I}\right|$	& number of all voxels\\
  $\mathcal{I}(r,x)$ & set of voxels entered by a beam with $r$ and $x$\\
  $X_m$ 		& sensor poses during mapping \\
  $Z$			& sensor measurements recorded during localization \\
  $Z_m$ 		& sensor measurements recorded during mapping \\
  $z$			& most recent measurement \\
  \hline
\end{tabular}
\caption{Notation}
\label{tab:notation}
\end{table}
\egroup

\subsection{Factorizing Forward Sensor Models}
\label{sec:recap}

The present paper deals with mapping and localizing based on lidar data.
The formalism presented later on is valid for a broad class of sensor models which we call factorizing models.
They are characterized by the following property:
\begin{align}
\label{eq:p_factorized}
 p(r\mid x,m) = \prod_{i\in\mathcal{I}(r,x)} f(r_i, m_i,\delta_i),
\end{align}
where $r$ is the length of the measured laser ray, $x$ denotes the sensor pose, and $m$ is a fixed map, $\mathcal{I}(r,x)$ is the set of indices of all voxels which the beam enters, $r_i$ is the radius that the beam travels within voxel $v_i$, $m_i$ denotes the map value of voxel $i$, and $\delta_i(r,x)$ tells whether or not voxel $i$ reflects the ray:
\begin{align}
 \delta_{i}=\delta_i(r,x)=\begin{cases}1& \text{if $v_i$ reflects the ray}\\
                           0 & \text{else}
                          \end{cases}
\end{align}

The reflection model~\cite{Haehnel2003:dynamic_environments} defines $f(r_i, m_i, \delta_i)$ as a binomial distribution over the event $\delta_i$:
\begin{equation}
\label{eq:mu_factors}
 f(r_i, \mu_i,\delta_i) = \mu_i^{\delta_i} (1-\mu_i)^{1-\delta_i} = f(\mu_i, \delta_i).
\end{equation}
It disregards the information about how far the ray travels inside each voxel: \eqref{eq:mu_factors} does not depend on $r_i$.

In contrast, the decay-rate model~\cite{Schaefer17:Decay} incorporates this information, as it computes $f$ as follows:
\begin{align}
\label{eq:lambda_factors}
 f(r_i, \lambda_i,\delta_i)=
\lambda_{i}^{\delta_i} e^{-\lambda_i r_i}.
\end{align}
Here, $\lambda_i$ is the decay rate within voxel $i$.
If we fix the value of $r_i$, Equation \eqref{eq:lambda_factors} yields a binary Poisson distribution over $\delta_i$. Conversely, for $\delta_i=1$, it yields an exponential distribution over $r_i$.

\subsection{Recursive Map Update}
\label{sec:recursion}

This section addresses the core of our approach. We show how to calculate a posterior distribution over all maps.
This distribution is called the belief \mbox{$bel(m):=p(m\mid Z_m, X_m)$},
where $Z_m$ denotes the set of all measurements recorded during the mapping process, and where $X_m$ denotes the set of corresponding sensor poses.
In order to calculate the full map posterior, we first need to introduce the following definitions.
\begin{align}
 {bel}(m)&=p(m\mid{Z}_m,X_m)\\
 \overline{bel}(m)&=p(m\mid\overline{Z}_m,X_m)\\
 {bel}(m_i)&=p(m_i\mid{Z}_m,X_m) \label{eq:belmi}\\
 \overline{bel}(m_i)&=p(m_i\mid\overline{Z}_m,X_m) \label{eq:belmibar}
\end{align}
Here, $\overline{Z}_m=Z_m\setminus \{z\}$ represents the set of mapping measurements without the most recent measurement $z$. Note that when referring to general sensor models, we call the sensor output~$z$; only in the context of factorizing models, we write $r$ for radius.
Proposition~\ref{prop:update} now shows how to recursively compute the full map posterior from the latest belief and the latest measurement.

\begin{prop}
\label{prop:update}
Assuming a factorizing sensor model
\begin{align}
 \label{eq:prop_factorizing}
 p(r\mid x,m) =& \prod_{i\in\mathcal{I}(r,x)} f(r_i, m_i,\delta_i),
\end{align}
and mutual independence of the individual voxels
\begin{align}
 \label{eq:prop_b}
bel(m)	=&	\prod_{i=1}^N bel(m_i),
\end{align}
the belief over each map value is recursively updated according to
\begin{align}
\label{eq:b_update}
bel(m_i) = \eta_i\ \overline{bel}(m_i)\ f(r_i, m_i,\delta_i),
\end{align}
where $\eta_i$ is a normalizing constant independent of $m_i$.
\end{prop}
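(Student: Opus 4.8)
The plan is to obtain the per-voxel recursion \eqref{eq:b_update} from the global Bayesian update for the full belief, using the two factorization hypotheses to decouple the voxels and then marginalizing out all but one of them. First I would split the mapping data into the earlier measurements $\overline{Z}_m$ and the most recent one $z=r$, taken at the pose $x$. Applying Bayes' rule to $bel(m)=p(m\mid r,\overline{Z}_m,X_m)$ and invoking the standard measurement Markov assumption --- that the new reading depends only on the current map and pose, so $p(r\mid m,\overline{Z}_m,X_m)=p(r\mid x,m)$ --- gives
\begin{align}
bel(m)=\eta\ p(r\mid x,m)\ \overline{bel}(m),
\end{align}
where $\eta=1/p(r\mid\overline{Z}_m,X_m)$ is independent of $m$.

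Next I would insert the factorizing likelihood \eqref{eq:prop_factorizing} and the voxel-independence assumption \eqref{eq:prop_b}, the latter applied to $\overline{bel}$ as well (it is the belief one step earlier, so its factorization is the inductive hypothesis of the recursion). Setting $g_i(m_i):=f(r_i,m_i,\delta_i)$ for $i\in\mathcal{I}(r,x)$ and $g_i(m_i):=1$ otherwise, the whole right-hand side collapses into a single product $\eta\prod_{i=1}^N \overline{bel}(m_i)\,g_i(m_i)$ over all voxels, so that $bel(m)$ separates into independent per-voxel factors.

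Finally I would recover the marginal by integrating out the remaining voxel values $m_{-i}$. Since the product factorizes, each factor with $j\neq i$ contributes only a constant, and what survives is
\begin{align}
bel(m_i)=\eta_i\ \overline{bel}(m_i)\ f(r_i,m_i,\delta_i),
\end{align}
with $\eta_i:=\eta\prod_{j\neq i}\int \overline{bel}(m_j)\,g_j(m_j)\,dm_j$ manifestly independent of $m_i$. This is exactly \eqref{eq:b_update}; for voxels outside $\mathcal{I}(r,x)$ one has $g_i\equiv 1$, so the update reduces to $bel(m_i)\propto\overline{bel}(m_i)$, reflecting that such cells receive no information from the beam.

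The step I expect to require the most care is the interplay between the two factorization assumptions during marginalization: one must check that a factorized prior $\overline{bel}$ combined with the per-voxel likelihood forces the posterior to factorize into precisely these marginals, so that the assumed independence of $bel(m)$ in \eqref{eq:prop_b} is self-consistent rather than an additional restriction, and that the leftover integrals are finite so that $\eta_i$ is well defined. The measurement Markov assumption, by contrast, is routine bookkeeping.
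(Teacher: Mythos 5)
Your proof is correct and takes essentially the same route as the paper: both arguments combine Bayes' rule with the measurement Markov assumption, the factorizing sensor model, the factorization of $\overline{bel}$ (read, as you note, as the inductive hypothesis of the recursion), and an integration over the remaining voxels whose value is a constant absorbed into $\eta_i$. The only difference is one of ordering — you apply Bayes' rule to the joint $bel(m)$ and then marginalize out $m_{\mathcal{I}\setminus i}$, whereas the paper applies Bayes' rule directly to the marginal $bel(m_i)$ and obtains the same integral by expanding $p(z\mid m_i,\overline{Z}_m,X_m)$ over the other voxels; the two orderings commute and produce identical computations.
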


\begin{proof}
To prove the above proposition, we define the following notation:
\begin{align}
\label{eq:n1}
&m_{\mathcal{I}\setminus i}:=m\setminus \{m_i\}
\end{align}
and
\begin{gather*}
\int\displaylimits_{ m_{\mathcal{I}\setminus i} }(\cdot)\,dm_{\mathcal{I}\setminus i} := \\
\int\displaylimits_{ m_1 }\dots \int\displaylimits_{ m_{i-1} } \int\displaylimits_{ m_{i+1} }\dots\int\displaylimits_{ m_N } (\cdot)\, dm_{N}\dots dm_{i+1} dm_{i-1}\dots dm_{1}.
\end{gather*}
Using this notation, we derive Proposition~\ref{prop:update} as follows:
\begin{align}
bel(m_i) 
&=	p(m_i\mid Z_m,X_m) \nonumber \\
&\stackrel{\eqref{eq:belmibar}}{=}\eta\, p(z\mid m_i, \overline{Z}_m,X_m)\ \overline{bel}(m_i) \nonumber \\
&=\eta\	\overline{bel}(m_i) \int\displaylimits_{ m_{\mathcal{I}\setminus i} } p(z, m_{\mathcal{I}\setminus i}\mid m_{i}, \overline{Z}_m,X_m)\ dm_{\mathcal{I}\setminus i} \nonumber \\
&\stackrel{\eqref{eq:n1}}{=}\eta\,	\overline{bel}(m_i) \int\displaylimits_{ m_{\mathcal{I}\setminus i}} p(z\mid m, \overline{Z}_m,X_m) \nonumber \\
&\hphantom{=\eta\ \overline{bel}(m_i) \int\displaylimits_{ m_{\mathcal{I}\setminus i}}}  p(m_{\mathcal{I}\setminus i} \mid m_{i}, \overline{Z}_m,X_m)\ dm_{\mathcal{I}\setminus i} \nonumber \\
&= \eta\ \overline{bel}(m_i) \int\displaylimits_{ m_{\mathcal{I}\setminus i}} \prod_{j\in \mathcal{I}(r,x)}f(r_j, m_j,\delta_j) \nonumber \\
&\hphantom{= \eta\,\overline{bel}(m_i) \int\displaylimits_{ m_{\mathcal{I}\setminus i}}} \prod_{k\in \mathcal{I}\setminus \{i\}} \overline{bel}(m_k)\ dm_{\mathcal{I}\setminus i} \label{eq:factor} \\
&= \eta_i\ \overline{bel}(m_i)\ f(r_i, m_i,\delta_i). \nonumber 
\end{align}
$\eta$ and $\eta_i$ are normalizing constants independent of $m_i$.
To obtain \eqref{eq:factor}, we make use of both \eqref{eq:prop_factorizing}  and \eqref{eq:prop_b}.
To transition from \eqref{eq:factor} to the last line, we pull $f(r_i, m_i,\delta_i)$ out of the integral and merge the remaining integral, which is independent of $m_i$, with the normalizer.
\end{proof}


The update equation~\eqref{eq:b_update} in Proposition~\ref{prop:update} might look familiar:
It is a generalization of the well-known map update
\mbox{$bel(m) = \eta\ \overline{bel}(m)\ p(z\mid x,m)$}, which can be derived from Bayes rule in a straight-forward manner,
see equation (7) in~\cite{Thrun2003:Survey}.
Another update equation which is related to~\eqref{eq:b_update} is the voxel-wise update for binary occupancy maps, see (18) in~\cite{Thrun2003:Survey}. In contrast to the proposed update equation, the latter employs the inverse sensor model.

\subsection{Closed-Form Map Posteriors}
\label{sec:explicit_posterior}

In this section, we leverage Proposition~\ref{prop:update} to derive the closed-form map posteriors for the reflection model and for the decay-rate model.
For the reflection model, update equation \eqref{eq:b_update} yields the following posterior over $\mu_i$:
\begin{align}
bel(\mu_i)
& \propto \prod_{Z_m} f(\mu_i,\delta_i)\ p(\mu_i) \notag \\
& \stackrel{\eqref{eq:mu_factors}}{\propto} \mu_i^{H_i} (1-\mu_i)^{M_i}\ p(\mu_i) \notag \\
& \propto \Bet(H_i+1,M_i+1)\ p(\mu_i).
\label{eq:bel_mu}
\end{align}
Here, $p(\mu_i)$ denotes the prior distribution over the map, $H_i$ tells how many rays are reflected in $v_i$, and $M_i$ is the number of rays that penetrate $v_i$ without reflection.
$\Bet(\cdot)$ denotes a beta distribution. 

If the prior is a beta distribution $p(\mu_i)=\Bet(\alpha,\beta)$, which is the conjugate prior for the binomial distribution $f(\mu_i,\delta_i)$, Equation~\eqref{eq:bel_mu} yields
\begin{align}
\label{eq:b(mu)}
bel(\mu_i)
=&\Bet(H_i+\alpha,{M}_i+\beta).
\end{align}

The most likely reflection map can easily be derived from~\eqref{eq:b(mu)}:
Assuming a uniform prior distribution \mbox{$p(\mu_i)=\Bet(1,1)=1$} and computing the mode of the resulting beta posterior distribution yields the same result as formulated by H{\"a}hnel~et~al.~\cite{Haehnel2003:dynamic_environments} for maximum likelihood reflection maps:
\begin{align}
\label{eq:mu}
 \mu_i^*=\frac{H_i}{H_i+M_i}.
\end{align}

For the decay-rate model, the update equation \eqref{eq:b_update} becomes
\begin{align*}
\nonumber
bel(\lambda_i)
& \propto \prod_{Z_m} f(r_i, \lambda_i,\delta_i)\ p(\lambda_i) \\
& \stackrel{\eqref{eq:lambda_factors}}{\propto} 
  \lambda_i^{H_i} e^{-\lambda_i R _i}\ p(\lambda_i) \nonumber \\
& \propto \Gam(H_i+1,R_i)\ p(\lambda_i).
\end{align*}
\mbox{$\Gam(\cdot)$} denotes a gamma distribution, $p(\lambda_i)$ is the prior map distribution, and $R_i$ is the sum of the distances all rays travel within $v_i$.

If $p(\lambda_i)$ is a gamma distribution $\Gam(\alpha,\beta)$, which is the conjugate prior for the Poisson distribution and for the exponential distribution, we obtain the gamma-distributed belief
\footnote{In the context of height maps, Marks~et~al.~\cite{Marks08:GammaSLAM} also obtain gamma-shaped posteriors over grid values.}
\begin{align}
\label{eq:b(lambda)}
 bel(\lambda_i)=\Gam\left(H_i+\alpha,R_i+\beta \right).
\end{align}

Setting $\alpha=1$ and $\beta=0$ leads to the so-called uninformative prior.
Plugging this prior into~\eqref{eq:b(lambda)} and computing the mode of the resulting posterior leads to the decay rates of the maximum likelihood approach as given in~\cite{Schaefer17:Decay}:
\begin{align}
\label{eq:lambda}
 \lambda_i^* = \frac{H_i}{R_i}.
\end{align}

For both the reflection model and the decay-rate model, the parameters $\alpha$ and $\beta$ of the prior distribution need to be estimated during mapping. 
In Section~\ref{sec:exp}, we explain how we obtain the prior parameters used throughout the experiments.

\subsection{Localization with Map Posteriors}
\label{sec:localization}

In this section, we formulate robot localization on the basis of the full posterior over the map rather than on the basis of a fixed map. 
In contrast to the well-known approach~\cite{Burgard2005:Probabilistic_Robotics} which computes the belief over the robot pose on the basis of the given map $m$ as
\begin{align}
\label{eq:standard_update}
bel_m(x) = \eta\ \overline{bel}(x)\ p(z\mid x,m),
\end{align}
we leverage the full posterior \mbox{$bel(m)=p(m\mid X_m,Z_m)$} instead of $m$. 
Consequently, we derive the pose belief as follows:
\begin{align}
bel(x)
& =	p(x\mid Z,Z_m,X_m) \notag \\
& = \eta\ p(z\mid x,\overline{Z},Z_m,X_m)\ 
    \underbrace{p(x\mid \overline{Z},Z_m,X_m)}_{=:\overline{bel}(x)} \notag \\
& = \eta\ \overline{bel}(x) 
    \int p(z\mid x,m)\ p(m\mid \overline{Z},Z_m,X_m)\ dm \notag \\
& = \eta\ \overline{bel}(x)\underbrace{\int p(z\mid x,m)\ bel(m)\ dm}_{=:L(z,x)}.
  \label{eq:measurement_likelihood}
\end{align}
Here, $X_m$ and $Z_m$ are the poses and measurements recorded during
the mapping process, respectively, $Z$ denotes the measurements recorded during localization,
and $\overline{Z}=Z\setminus \{z\}$ is the set of all measurements but the most recent one.

Equations~\eqref{eq:standard_update} and \eqref{eq:measurement_likelihood} differ only in the last term, which, in~\eqref{eq:standard_update}, is called the measurement likelihood. 
Analogously, we call~$L(z,x)$ in~\eqref{eq:measurement_likelihood} the measurement likelihood based on the map posterior.
The next section presents closed-form solutions of~$L(z,x)$ for the reflection model and the decay-rate model.

\subsection{Closed-Form Measurement Likelihoods}
\label{sec:explicit_likelihood}

For two particular factorizing sensor models, the reflection model and the decay-rate model, the solution of the integral contained in the measurement likelihood $L(z,x)$ leads to closed-form expressions.
To compute those, we need to factorize $L(z,x)$ first:
\begin{align*}
&L(z,x)\\
=& \int\displaylimits_m p(z\mid x,m)\ bel(m)\ dm \\
=& \int\displaylimits_m \prod_{i\in \mathcal{I}(r,x)} f(r_i, m_i,\delta_i) 
   \prod^N_{j=1} bel(m_j)\ dm \\
=& \int\displaylimits_{m_{\mathcal{I}(r,x)}}  \prod_{i\in \mathcal{I}(r,x)}    
   f(r_i, m_i,\delta_i) \prod_{j\in \mathcal{I}(r,x)} bel(m_j)\ dm_{\mathcal{I}(r,x)} \\
=& \prod_{i\in \mathcal{I}(r,x)} l(r_i,\delta_i)
\end{align*}
with 
\begin{align}
\label{eq:l_general}
l(r_i,\delta_i) := \int  f(r_i, m_i,\delta_i)\ bel(m_i)\ dm_{i}.
\end{align}

Now, we evaluate this integral for both sensor models.
For the reflection model with beta-shaped prior \mbox{$p(\mu_i)=\Bet(\alpha,\beta)$}, we
take the distribution $f(\mu_i,\delta_i)$ from \eqref{eq:mu_factors} and the posterior from \eqref{eq:b(mu)}, plug them into \eqref{eq:l_general}, and solve the integral:
\begin{align}
\label{eq:lref}
l_\text{ref}(r_i,\delta_i) 
= l_\text{ref}(\delta_i)
= \frac{\left(H_i+\alpha\right)^{\delta_i}\left(M_i+\beta\right)^{1-\delta_i}}
  {H_i+\alpha+{M}_i+\beta}.
\end{align}

If the posterior $bel(\mu_i)$ is set to a Dirac delta distribution \mbox{$\Del(\mu_i-\mu_i^*)$},
Equation~\eqref{eq:l_general} reproduces the measurement likelihood based on the most likely map, as derived in~\cite{Haehnel2003:dynamic_environments}:
\begin{align}
\label{eq:lref_det}
l_\text{ref}(\delta_i) 
= f\left(\delta_i\right) 
= \frac{H_i^{\delta_i}M_i^{1-\delta_i}}{H_i+{M}_i}.
\end{align}
Equation~\eqref{eq:lref_det} is only valid for $H_i+M_i > 0$. 
If no ray has visited the voxel during mapping, the maximum likelihood approach has to assign some initial value.
In contrast, \eqref{eq:lref} is also valid for voxels that have not been visited by any ray during mapping.

Note that in the special case of $\alpha=\beta$, Equation~\eqref{eq:lref_det} can be transformed into \eqref{eq:lref} using Laplace smoothing --
see for example Equation~(13) in \cite{Valcarce16:Laplace_smoothing}.

In order to obtain closed-form solutions for the decay-rate model, we assume a gamma-shaped prior \mbox{$p(\lambda_i)=\Gam(\alpha,\beta)$} and plug
\eqref{eq:lambda_factors} and \eqref{eq:b(lambda)} into \eqref{eq:l_general}. This leads to
\begin{align}
\nonumber
l_\text{dec}(r_i,\delta_i)  
= \left(\frac{R_i+\beta}{R_i+\beta+r_i} \right)^{H_i+\alpha}   
  \left(\frac{H_i+\alpha}{R_i+\beta+r_i}\right)^{\delta_i}
\end{align}

Analogously to the reflection model, the measurement likelihood based on the most likely map as derived in~\cite{Schaefer17:Decay} can be reproduced from \eqref{eq:l_general} by setting \mbox{$bel(\lambda_i)=\Del(\lambda_i-\lambda_i^*)$}:
\begin{align*}
l_\text{dec}(r_i,\delta_i)
= f\left(r_i, \delta_i\right)
= e^{-\frac{H_i}{R_i} r_i}\left(\frac{H_i}{R_i}\right)^{\delta_i}.
\end{align*}

Until now, we have assumed that the lidar sensor reports a reflection for each emitted beam.
In practice, however, the sensor range is always limited by a lower bound $r_\text{min}$ and an upper bound $r_\text{max}$.
The following equations show for both measurement models how to attribute probabilities to these out-of-range measurements:
\begin{align*}
P(r< r_{min}\mid x,Z_m,X_m)
&= 1 - \prod_{i\in \mathcal{I}(r_\text{min},x)} l(r_i,\delta_i=0), \\
P(r> r_{max}\mid x,Z_m,X_m) 
&= \prod_{i\in \mathcal{I}(r_\text{max},x)} l(r_i,\delta_i=0).
\end{align*}

\section{Experiments}
\label{sec:exp}

In the previous section, we have derived all the necessary equations to compute the posterior over a grid map and to leverage this posterior for localization. 
Now, to demonstrate that localization benefits from employing the full map posterior instead of the most likely map, we perform a simulation and extensive real-world experiments.

As shown in Section~\ref{sec:explicit_posterior}, the computation of the map posterior  for the reflection model and for the decay-rate model requires the specification of the two parameters $\alpha$ and $\beta$ of the prior distribution. 
In our experiments, we determine them as follows.
First, we compute the empirical mean and variance of the map value over all voxels.
Then, we choose $\alpha$ and $\beta$ such that the mean and variance of the 
parameterized distributions equal these empirical values.
To that end, we solve the known equations for the mean and the variance of the beta and gamma distribution for $\alpha$ and $\beta$. For the reflection model, we obtain
\begin{align*}
\alpha &= -\frac{
		\E\left[\mu\right]
		\left(
			\E\left[\mu\right]^2 - \E\left[\mu\right] + \var\left[\mu\right]
		\right)
	}
	{\var\left[\mu\right]}, \\ 
\beta  &= \frac{
		\E\left[\mu\right] - \var\left[\mu\right] + \E\left[\mu\right]\var\left[\mu\right] - 2\E\left[\mu\right]^2 + \E\left[\mu\right]^3
	}
	{\var\left[\mu\right]}.
\end{align*}
For the decay-rate model, the parameters are
\begin{align*}
\alpha &= \frac{\E\left[\lambda\right]^2}{\var\left[\lambda\right]}, \\
\beta  &= \frac{\E\left[\lambda\right]}{\var\left[\lambda\right]}.
\end{align*}
To strictly avoid the repeated use of the same information, one has to estimate these parameters for each voxel separately, computing the empirical mean and variance over all voxels but the considered one.
However, the sheer number of observations in our datasets render this effect negligible.
Hence, it is sufficient to compute $\alpha$ and $\beta$ only once.

Throughout the experiments section, MLM denotes the maximum likelihood approach, while FMP refers to the proposed approach, which leverages the full map posterior.
REF and DEC denote the reflection model and the decay-rate model, respectively.

\subsection{Localization in Simulation}
\label{sec:sim}

We simulate the following scenario to evaluate the localization performance with both maximum likelihood maps and full posteriors:
A mobile robot is located in a corridor that is modeled by a row of $N=100$ consecutive voxels.
Each experiment run consists of two phases:
During the mapping phase, the robot visits every voxel $n$ times and for each voxel collects the measurements $H_i$, $M_i$, and $R_i$, which it uses to build both a reflection map and a decay-rate map.
In the localization phase, which consists of 100 iterations, the robot traverses the corridor from start to end knowing its motor commands.
In each iteration, the robot moves to the next voxel, fires its sensor once, and updates the belief over its pose.
Therefor, with the maximum likelihood approach it uses~\eqref{eq:standard_update}, with the proposed approach it uses~\eqref{eq:measurement_likelihood}.
After the robot has arrived at the last voxel of the corridor, we evaluate the pose belief at the true pose averaged over each iteration:
\begin{align*}
\rho := \mathbf{E}\left(bel\left(x_\text{true}\right)\right).
\end{align*}

We perform $10,000$ such localization runs for each \mbox{$n\in\{1,2,3,4,5,10,20,50,100,200\}$}.

In each run, we synthesize a new map $m$ by drawing samples from a uniform distribution for the reflection model or from a gamma distribution $\Gam\left(1,1\right)$ for the decay-rate model.
This map is hidden from the localization algorithms.
We use $m$ to simulate the measurements the robot records during mapping and localization by sampling from the distribution described in Equation~\eqref{eq:mu_factors} for the reflection model or from \eqref{eq:lambda_factors} for the decay-rate model.
The belief over the robot pose is initialized with a uniform distribution over $x$.

For both measurement models, we compare three algorithms:
MLM, FMP with uniform prior, and FMP with conjugate prior.
The results in Figure~\ref{fig:sim} indicate that the proposed method -- FMP with conjugate prior -- yields higher localization accuracy than the compared methods for both measurement models.
Moreover, we observe that the benefit of the proposed method is greater for the reflection model than for the decay-rate model.
We perform a one-tailed, paired-sample t-test, which validates these observations:
For the reflection model, the proposed method outperforms the other two approaches for $n\leq100$ with a probability greater than $0.9999$. For the decay-rate model, we obtain the same significance level for $n\leq4$.

\begin{figure*}[t!]
    \centering
    \begin{subfigure}[c]{.5\textwidth}
        \centering	    
        \resizebox{\textwidth}{!}{\small
%
%
\definecolor{mycolor1}{rgb}{0.44157,0.74902,0.43216}%
\begin{tikzpicture}

\begin{axis}[%
width=0.951\textwidth,
height=0.665\textwidth,
at={(0\textwidth,0\textwidth)},
scale only axis,
xmin=-0.2,
xmax=5.49831736654804,
xlabel style={font=\color{white!15!black}},
xlabel={$\ln\left(n\right)$},
ymin=0,
ymax=0.9,
ylabel style={font=\color{white!15!black}},
ylabel={$\rho$},
axis background/.style={fill=white},
axis x line*=bottom,
axis y line*=left,
legend style={at={(0.97,0.03)}, anchor=south east, legend cell align=left, align=left, draw=white!15!black}
]
\addplot [color=blue, line width=1.0pt]
 plot [error bars/.cd, y dir = both, y explicit, error bar style={solid}] 
 table[row sep=crcr, y error plus index=2, y error minus index=3]{%
0	0.429336566702808	0.075384355220395	0.075384355220395\\
0.693147180559945	0.568094150031899	0.0545471518426513	0.0545471518426513\\
1.09861228866811	0.621599118337337	0.0431269773177616	0.0431269773177616\\
1.38629436111989	0.647255951072116	0.0368946845432782	0.0368946845432782\\
1.6094379124341	0.668175863743393	0.0322047769588088	0.0322047769588088\\
2.30258509299405	0.716359207027577	0.0208671388913414	0.0208671388913414\\
2.99573227355399	0.747350425038223	0.0132820342857407	0.0132820342857407\\
3.91202300542815	0.758356691110406	0.010406011030274	0.010406011030274\\
4.60517018598809	0.765735890101999	0.00931558694732143	0.00931558694732143\\
5.29831736654804	0.763970286476162	0.00926612060120593	0.00926612060120593\\
};
\addlegendentry{FMP with beta prior}

\addplot [color=mycolor1, dashdotted, line width=1.0pt]
 plot [error bars/.cd, y dir = both, y explicit, error bar style={solid}]
 table[row sep=crcr, y error plus index=2, y error minus index=3]{%
0	0.317949055507519	0.0411209322688187	0.0411209322688187\\
0.693147180559945	0.498638985096498	0.0396757408807646	0.0396757408807646\\
1.09861228866811	0.580644267327148	0.031302504560164	0.031302504560164\\
1.38629436111989	0.623251776157871	0.026366451703955	0.026366451703955\\
1.6094379124341	0.651827163751004	0.0223648782461834	0.0223648782461834\\
2.30258509299405	0.706909678074074	0.0160141399184694	0.0160141399184694\\
2.99573227355399	0.739891236496165	0.0114517050466844	0.0114517050466844\\
3.91202300542815	0.754634714153797	0.0101122909421138	0.0101122909421138\\
4.60517018598809	0.764311812864023	0.00921959364120483	0.00921959364120483\\
5.29831736654804	0.763508370453491	0.00923297768763936	0.00923297768763936\\
};
\addlegendentry{FMP with uniform prior}

\addplot [color=red, dashed, line width=1.0pt]
 plot [error bars/.cd, y dir = both, y explicit, error bar style={solid}]
 table[row sep=crcr, y error plus index=2, y error minus index=3]{%
0	0.0391539745216354	0.00118765573252883	0.00118765573252883\\
0.693147180559945	0.0832870326579564	0.00559998032263228	0.00559998032263228\\
1.09861228866811	0.139339527836842	0.0141442320800584	0.0141442320800584\\
1.38629436111989	0.198129938177884	0.0256199292866992	0.0256199292866992\\
1.6094379124341	0.256454281859673	0.0382971058634144	0.0382971058634144\\
2.30258509299405	0.461424559108967	0.0708927432303436	0.0708927432303436\\
2.99573227355399	0.62117549702656	0.069494113879125	0.069494113879125\\
3.91202300542815	0.730237182125877	0.0281034333784785	0.0281034333784785\\
4.60517018598809	0.759919365185051	0.0140562836905504	0.0140562836905504\\
5.29831736654804	0.76283648504426	0.0106529939139553	0.0106529939139553\\
};
\addlegendentry{MLM}

\end{axis}
\end{tikzpicture}
        \caption{Reflection model}
    \end{subfigure}%
    \begin{subfigure}[c]{.5\textwidth}
        \centering
        \resizebox{\textwidth}{!}{\small
%
%
\definecolor{mycolor1}{rgb}{0.44157,0.74902,0.43216}%
\begin{tikzpicture}

\begin{axis}[%
width=0.951\textwidth,
height=0.665\textwidth,
at={(0\textwidth,0\textwidth)},
scale only axis,
xmin=-0.2,
xmax=5.49831736654804,
xlabel style={font=\color{white!15!black}},
xlabel={$\ln\left(n\right)$},
ymin=0,
ymax=0.9,
ylabel style={font=\color{white!15!black}},
ylabel={$\rho$},
axis background/.style={fill=white},
axis x line*=bottom,
axis y line*=left,
legend style={at={(0.97,0.03)}, anchor=south east, legend cell align=left, align=left, draw=white!15!black}
]
\addplot [color=blue, line width=1.0pt]
 plot [error bars/.cd, y dir = both, y explicit, error bar style={solid}]
 table[row sep=crcr, y error plus index=2, y error minus index=3]{%
0	0.722443398962342	0.024344657094645	0.024344657094645\\
0.693147180559945	0.819609801729155	0.00858407817465493	0.00858407817465493\\
1.09861228866811	0.84393441445562	0.0058392162775352	0.0058392162775352\\
1.38629436111989	0.853167684791122	0.00496717005341647	0.00496717005341647\\
1.6094379124341	0.85982995095898	0.0044106545924267	0.0044106545924267\\
2.30258509299405	0.873279376872576	0.00341621031027147	0.00341621031027147\\
2.99573227355399	0.879957855714679	0.00289298268094043	0.00289298268094043\\
3.91202300542815	0.88420134364848	0.00267390746992	0.00267390746992\\
4.60517018598809	0.884926875027556	0.00268436794031937	0.00268436794031937\\
5.29831736654804	0.884579775342673	0.00270993228456282	0.00270993228456282\\
};
\addlegendentry{FMP with gamma prior}

\addplot [color=mycolor1, dashdotted, line width=1.0pt]
 plot [error bars/.cd, y dir = both, y explicit, error bar style={solid}]
 table[row sep=crcr, y error plus index=2, y error minus index=3]{%
0	0.651031450811996	0.0391879103210245	0.0391879103210245\\
0.693147180559945	0.80085167315398	0.0132136865224591	0.0132136865224591\\
1.09861228866811	0.837874509727629	0.00765001123121396	0.00765001123121396\\
1.38629436111989	0.851656386290506	0.00583305497782989	0.00583305497782989\\
1.6094379124341	0.859695520857544	0.00499242976538517	0.00499242976538517\\
2.30258509299405	0.874656650025602	0.00349926280804644	0.00349926280804644\\
2.99573227355399	0.880991989927562	0.00290378268355636	0.00290378268355636\\
3.91202300542815	0.884721533915026	0.00266424508359209	0.00266424508359209\\
4.60517018598809	0.885191221390671	0.00268154952399791	0.00268154952399791\\
5.29831736654804	0.884714206671178	0.00270886554762108	0.00270886554762108\\
};
\addlegendentry{FMP with uninformative prior}

\addplot [color=red, dashed, line width=1.0pt]
 plot [error bars/.cd, y dir = both, y explicit, error bar style={solid}]
 table[row sep=crcr, y error plus index=2, y error minus index=3]{%
0	0.44381211561625	0.0873984709942144	0.0873984709942144\\
0.693147180559945	0.76610075643959	0.0287291837595864	0.0287291837595864\\
1.09861228866811	0.83175596356292	0.0112484657233275	0.0112484657233275\\
1.38629436111989	0.851036062653771	0.00728271276719769	0.00728271276719769\\
1.6094379124341	0.861022012637767	0.00572963612920014	0.00572963612920014\\
2.30258509299405	0.876498053379717	0.00360366282859611	0.00360366282859611\\
2.99573227355399	0.882122663382338	0.00292521191381918	0.00292521191381918\\
3.91202300542815	0.885215984346692	0.0026684522600154	0.0026684522600154\\
4.60517018598809	0.885442481701114	0.00268220046027457	0.00268220046027457\\
5.29831736654804	0.884842687289185	0.00270986269792257	0.00270986269792257\\
};
\addlegendentry{MLM}

\end{axis}
\end{tikzpicture}
        \caption{Decay-rate model}
    \end{subfigure}
    \caption{Localization accuracy in a simulated environment.
    $n$ is the number of observations per voxel during the mapping process.
    The data points correspond to $n\in\{1,2,3,4,5,10,20,50,100,200\}$. 
    The accuracy measure~$\rho$ shows the average probability which the algorithm assigns to the ground truth position.
    The error bars represent the variances over $10,000$ runs.}
    \label{fig:sim}
\end{figure*}
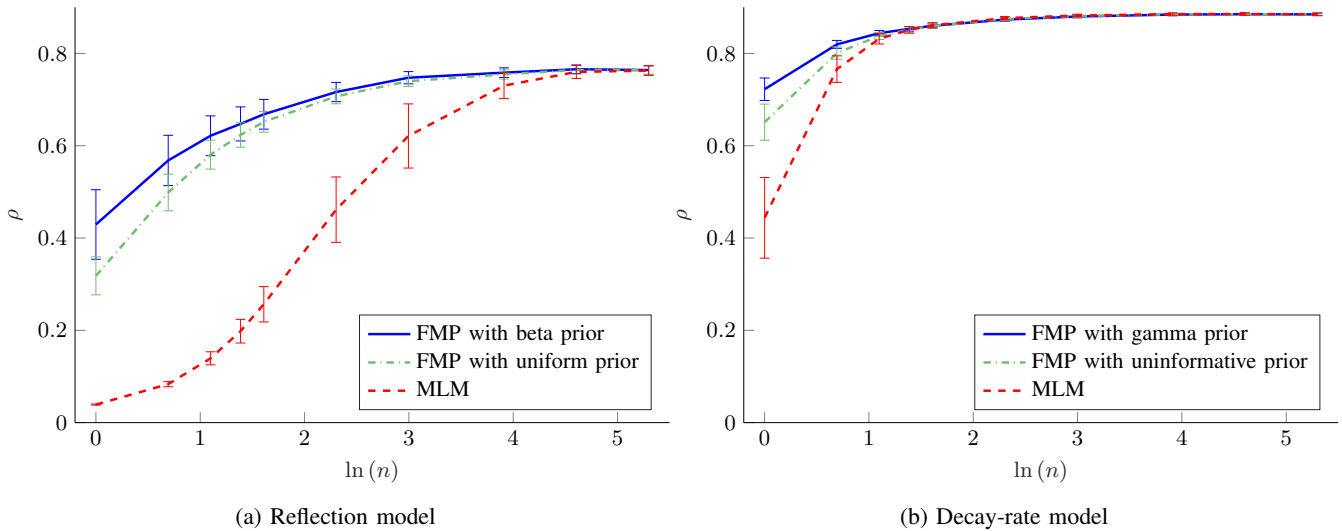

\subsection{Real-World Localization}

In order to validate the findings of the simulation in the real world, we test our approach on three datasets in which our mobile off-road robot navigates different kinds of environments: 
In the campus dataset, the robot drives around on the campus of the University of Freiburg. 
The forest dataset, a section of which is shown in Figure~\ref{fig:schauinsland}, was recorded on a small trail in the middle of a forest,
while the park dataset contains a long trajectory along a broad road through an open forest.
We tesselate the environment into cubic axis-aligned voxels with edge length 0.5\,m.
Consequently, the campus maps contain $444\times 406\times 43$ voxels, the park maps contain $515\times 561\times 41$ voxels, and the forest maps contain $393\times 403\times 86$ voxels.

We use the following hardware: 
The off-road robot VIONA by Robot Makers carries a Velodyne HDL-64E lidar sensor and an Applanix POS LV localization system,
which provides a centimeter-accurate estimate of the robot pose by fusing the data from multiple GPS sensors, an IMU, and odometry.
Due to its accuracy, we use the Applanix data as pose ground truth during mapping and in the evaluation of all localization estimates.

To assess the localization performance using full posteriors versus maximum-likelihood maps, we employ different measures:

\subsubsection{Measurement Likelihood}
We employ the logarithm of the measurement likelihood at the true pose as a measure of how well the approaches predict the measurements given the true pose.
For MLM, the measurement likelihood is defined as $p(z \mid x, m)$. For FMP, it is defined by $L(z,x)$ in \eqref{eq:measurement_likelihood}.
We sum up the likelihood values of all measurements in a dataset and divide the result for MLM by the result for FMP.
The ratios are presented in Table~\ref{tab:pz}.
For both measurement models and all three datasets, these values are greater than one, which means that FMP achieves better prediction of the measurements than MLM.

\begin{table}	
	\normalsize
	\centering
    \begin{tabular}{ l | r  r }
    		& REF & DEC \\ 
    		\midrule
    campus 	& 1.21 &   1.16\\
    forest 	& 1.22 &  1.31\\
    park 	& 1.25 &  1.27\\
    \end{tabular}
	\caption{Log-likelihood ratios of the measurements given the ground-truth pose.
	For each dataset, we show the ratio of the cumulated measurement log-likelihoods for MLM to the ones for FMP. Values greater than one mean that FMP attributes higher likelihoods to measurements given the true robot pose.}
	\label{tab:pz}
\end{table}

\subsubsection{Kullback-Leibler Divergence}

As a measure of dissimilarity, we employ the Kullback-Leibler (KL) divergence from the estimated position distribution $p$ to the ground-truth pose distribution $p_{gt}$:
\begin{align}
D_{KL}\left(p_{gt}(x) \parallel p(x)\right) = \int p_{gt}(x)\ \log\left(\frac{p_{gt}(x)}{p(x)}\right)\ dx.
\label{eq:kl}
\end{align}
We assume that $p_{gt}$ is a bivariate normal distribution with standard deviation \mbox{$\sigma_x=\sigma_y=0.05\,m$}, chosen according to the specifications of the Applanix localization system. 
The formula for the pose likelihood $p$ depends on the approach:
\begin{align}
\label{eq:pmci}
p = \begin{cases}
p(x \mid z, m) \propto p(z \mid x, m) &\text{for MLM} \\
p(x \mid z, X_m, Z_m) \propto L(z,x) &\text{for FMP}
\end{cases}
\end{align}

To evaluate Equation~\eqref{eq:kl}, we perform a Monte-Carlo integration over $x$:
\begin{align*}
D_{KL}(p_{gt} \parallel p) \approx \sum \log\left(\frac{p_{gt}}{p}\right),
\end{align*}
where $p_{gt}$ and $p$ are normalized over the samples.
We compute the results shown in Table~\ref{tab:kl} by summing up the approximated $D_{KL}$ values for all measurements in one dataset and dividing the result for MLM by the result of FMP.
All ratios are greater than one, which implies that the distribution estimated by FMP  is closer to the ground truth than the distribution estimated by MLM.
Moreover, all ratios are higher than the corresponding ratios of the measurement likelihoods in Table~\ref{tab:pz}.

\begin{table}
	\normalsize	
	\centering
    \begin{tabular}{ l | r  r }
    		& REF & DEC \\ 
    		\midrule
    campus 	& 1.24 &  1.25\\
    forest 	& 1.27 &   1.82\\
    park 	& 1.37 &  1.59\\
    \end{tabular}
	\caption{Ratios of the Kullback-Leibler divergence from the position distribution estimated by MLM and FMP to the ground truth distribution.
	For each dataset, we show the ratio of the cumulated KL divergences for the MLM approach to the corresponding value for the FMP approach.
	Ratios greater than one indicate that the distribution estimated by FMP is closer to the  ground-truth than the one estimated by MLM.}
	\label{tab:kl}
\end{table}

\subsubsection{Monte-Carlo Localization}

The results of the two aforementioned measures are entirely reproducible, in the sense that they are independent of localization algorithm design.
In the corresponding experiments, the proposed approach always yields better results than the maximum likely approach.
To demonstrate the consequences of this behavior in a real application, we evaluate the localization accuracy of two particle filters.
Each filter employs 3000 particles to localize the off-road robot in six dimensions in the park scenario with the decay-rate model.
The initial variance is 0.1\,m in the translational dimensions and 0.1\,rad in the rotational dimensions.
The filters only differ in the method used to weight the particles: 
One uses the likelihood derived from the most likely map, the other uses the likelihood derived from the full map posterior. 
Figure~\ref{fig:mcl} shows the corresponding localization errors averaged over ten runs.

\begin{figure}[htb]
\centering
\resizebox{\columnwidth}{!}{\input{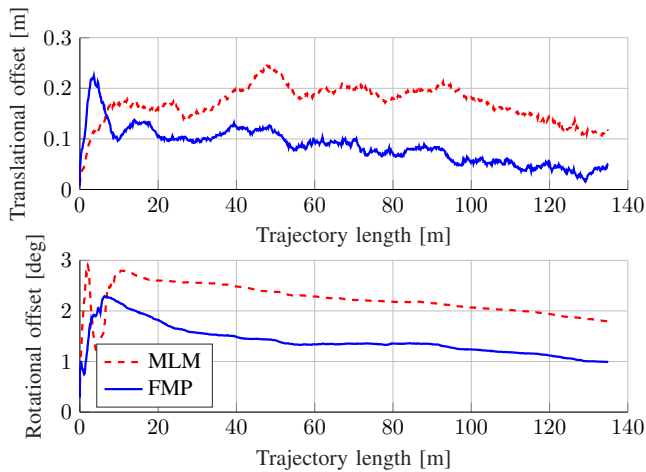}}
\caption{Localization accuracy with the decay-rate model on the park dataset averaged over ten runs.}
\label{fig:mcl}
\end{figure}

\section{Conclusion and Future Work}

In this paper, we present an approach to compute posterior
distributions over real-valued grid maps from lidar observations.  We
demonstrate that for the well-established reflection sensor model and
the recently introduced decay-rate sensor model, the posterior
distributions can be represented in closed form.  Our approach
requires the same measurement information and has the same
computational demands as approaches which only determine the mode of
the distribution.  Simulations and extensive real-world experiments
show that taking into account the full map posterior improves the
accuracy of robot localization.

In the future, we plan to relax the assumption that all map cells are
independent by accounting for measurement noise.  Moreover, we will
embed the presented approach into a SLAM framework, which we will then
use to localize our off-road robot during operation.

\bibliographystyle{plain}
\bibliography{mapposterior}

\end{document}